\newlength\replength
\newcommand\repfrac{.66}
\newcommand\rulewidth{.8pt}
\newcommand\tdashfill[1][\repfrac]{\cleaders\hbox to \replength{%
  \smash{\rule[\arraystretch\ht\strutbox]{\repfrac\replength}{\rulewidth}}}\hfill}
\DeclareFontFamily{OT1}{pzc}{}
\DeclareFontShape{OT1}{pzc}{m}{it}{<-> s * [1.10] pzcmi7t}{}
\DeclareMathAlphabet{\mathpzc}{OT1}{pzc}{m}{it}
\DeclareMathOperator{\sign}{sign}
\newtheorem{definition}{Definition}[section]
\newtheorem{theorem}{Proposition}[section]
\newtheorem{lemma}[theorem]{Lemma}
\newcommand\defeq{\mathrel{\overset{\makebox[0pt]{\mbox{\normalfont\tiny def}}}{=}}}
\begin{document}

\title{Wasserstein Exponential Kernels}

\author{
 Henri De Plaen\\
  ESAT-STADIUS\\
  KU Leuven\\
  Leuven, 3001 Belgium \\
  \texttt{henri.deplaen@esat.kuleuven.be} \\
  %% examples of more authors
   \And
   Michaël Fanuel\\
  ESAT-STADIUS\\
  KU Leuven\\
  Leuven, 3001 Belgium \\
  \texttt{michael.fanuel@esat.kuleuven.be} \\
  %% examples of more authors
   \And
 Johan A. K. Suykens\\
  ESAT-STADIUS\\
  KU Leuven\\
  Leuven, 3001 Belgium \\
  \texttt{johan.suykens@esat.kuleuven.be} \\
}

\maketitle

\begin{abstract}
In the context of kernel methods, the similarity between data points is encoded by the kernel function which is often defined thanks to the Euclidean distance, a common example being the squared exponential kernel. Recently, other distances relying on optimal transport theory -- such as the Wasserstein distance between probability distributions -- have shown their practical relevance for different machine learning techniques. In this paper, we study the use of exponential kernels defined thanks to the regularized Wasserstein distance and discuss their positive definiteness. More specifically, we define Wasserstein feature maps and illustrate their interest for supervised learning problems involving shapes and images. Empirically, Wasserstein squared exponential kernels are shown to yield smaller classification errors on small training sets of shapes, compared to analogous classifiers using Euclidean distances.
\end{abstract}

\section{Introduction}
Contemporary machine learning methods frequently rely on neural networks, and shape recognition relies more specifically on convolutional neural networks. The big advantage of the latter is its ability to take the underlying structure of the data into account by treating neighboring pixels together. If these methods are very often impressive by their performance, they are also known for their drawbacks such as a weak robustness and a difficult explainability. On the other side, though not always being as accurate as neural networks, kernel methods are praised for their easy explainability and robustness. Another advantage of kernel methods is their versatility as they easily be used in supervised and unsupervised methods, as well as for generation~\cite{GenRKM}. We emphasize here the interest of choosing a particular kernel based on Wasserstein distance for classifying small datasets consisting of shapes.

In the context of kernel methods, squared exponential kernel functions are widely used, mainly because of their universal approximation properties and their empirical success. These Gaussians consist of the exponential of the negative Euclidean distance squared. However, the Euclidean distance might not always be appropriate to compare data points when data has some specific structure. Indeed, it measures the correspondence of each feature independently of the other features. For example, let's consider the case of two identical 2D-shapes. When the two shapes overlap, their Euclidean distance is zero. However, if they do not overlap, their relative Euclidean distance becomes large although the shapes are identical. In other words, the Euclidean distance only compares each pixel at the same place on the grid, not taking the neighbouring pixels into account. The general structure of the features is not taken into account, only their strict correspondence.
However, another distance -- the Wasserstein distance -- gained popularity in  recent years since it can incorporate the structure of the data if the dataset can be processed so that the datapoints can be considered as probability distributions.
\subsection*{Contributions}
The contributions of this paper are the following. Empirically, we demonstrate that squared exponential kernels~\eqref{eq:kW} based on a regularized Wassertein distance  are performant on small scale classification problems involving shape datasets, compared for instance to the popular Gaussian RBF kernel~\cite{rasmussen:williams:2006}. Also, an approximation technique is proposed, with the so-called Wasserstein feature map, so that a positive semi-definite (psd) kernel can be defined from the Wasserstein squared exponential kernel which is not necessarily psd.
\subsection*{Notations and conventions}
In the sequel, we denote vectors by bold lower case letters. Let $\bm{1}$ be the all ones column vector. Also, we define $\delta_y$  to be the Dirac measure at point $y$. A kernel $k:\mathbb{R}^d\times \mathbb{R}^d \to \mathbb{R}$ is called positive semi-definite if all kernel matrices  $K = [k(\bm{x_i},\bm{x_j})]_{i,j =1}^n$ are positive semi-definite.

\subsection*{Wasserstein distances}
The Wasserstein distance is a central notion in optimal transport theory. Also known as the \emph{earth mover's distance}, it corresponds to the optimal transportation cost between two measures~\cite{villani2008optimal,GabrielPeyre2019COTW}. Let $p>0$. We then define two normalized empirical measures $\bm{\alpha} = \sum_{i=1}^m a_i \delta_{\bm{y_i}}$ and $\bm{\beta} = \sum_{j=1}^n b_j \delta_{\bm{z_j}}$ such that $\bm{a}^\top \bm{1} = 1$ and $\bm{b}^\top \bm{1} = 1$, and where  $\{\bm{y_i}\in \mathbb{R}^d\}_{i=1}^n$, $\{\bm{z_j}\in \mathbb{R}^d\}_{j=1}^m$ are support points.
Also, we define an Euclidean distance matrix $d_{ij} = \|\bm{y_i}-\bm{z_j}\|_2$. Then, the $p$-Wasserstein distance is given by 
\begin{equation*}
\mathcal{W}_p(\bm{\alpha},\bm{\beta}) = \left(\min_{\bm{\pi} \in \Pi\left(\bm{\alpha},\bm{\beta}\right)} \sum_{i,j} \pi_{ij}d_{ij}^p\right)^{\nicefrac{1}{p}},
\end{equation*}
 with $\Pi(\bm{\alpha},\bm{\beta}) = \left\{ \Pi \in \mathbb{R}^{m \times n} | \Pi \bm{1}= \bm{a} \; \mathrm{and} \; \Pi^\top \bm{1}=\bm{b} \right\}$, the set of joint distributions $\pi$ with specified marginals given by $\alpha$ and $\beta$. Intuitively, the optimal probability distribution $\pi^\star$ represents the optimal mass transportation scheme from $\alpha$ to $\beta$. A particular result occurs in the one-dimensional ($d=1$) case assuming the support points are ordered, \emph{i.e.}, $y_1 \leq \ldots \leq y_m$ and $z_1 \leq \ldots \leq z_n$, where the Wasserstein distance reduces to an $\ell^p$-norm: $\mathcal{W}_p^p\left(\frac{1}{n} \sum_{i=1}^n\delta_{y_i}, \frac{1}{n}\sum_{j=1}^n \delta_{z_j}\right) = \tfrac{1}{n}||\bm{y}-\bm{z}||_p^p$~\cite{GabrielPeyre2019COTW}. This connection between $\ell^p$-norms and Wasserstein distances is only clear in one dimension, illustrating here again the fact that $\ell^p$-norms don't take the underlying structure into account. To take it into account, we need to consider the case $d>1$. In this way we can define a new kernel function 
 \begin{equation}
 k_W(\bm{\alpha},\bm{\beta}) = \mathrm{exp}\left(-\frac{W_2^2(\bm{\alpha},\bm{\beta})}{2\sigma^2}\right).\label{eq:kW}
 \end{equation}
 However, this has some undesirable consequences concerning positive definiteness.
A kernel $k(\bm{x},\bm{y})=\exp\left( - t f(\bm{x},\bm{y})\right)$ is positive semi-definite for all $t>0$ if and only if $f(\bm{x},\bm{y})$ is Hermitian and \emph{conditionally} negative semi-definite~\cite{Berg1984}. Recall that a kernel is \emph{conditionally} negative semi-definite if any Gram matrix $F = [f(\bm{x_i},\bm{x_j})]_{i,j =1}^n$ (with $n\geq 2)$ built from a discrete sample satisfies $\bm{c}^\top F \bm{c}\leq 0$ for all $\bm{c}$ such that $\bm{1}^\top \bm{c} =0$. However, the Wasserstein distance for $d>1$ is not necessarily \emph{conditionally} negative definite~\cite{GabrielPeyre2019COTW}. The consequence is that we cannot guarantee that any resulting squared exponential kernel matrix built with the 2-Wasserstein distance is positive definite. This property is fundamental in kernel theory and more specifically for defining \emph{reproducing kernel Hilbert spaces} (RKHS; see~\cite{Scholkopf:2001:LKS:559923} for more details).

%This however is not straightforward: Wasserstein distances are not Hilbertian in more than one dimension~\cite{GabrielPeyre2019COTW}. Key works by Schoenberg have established the equivalence between semi-positive definite distance matrices (????), negative definite square distances and Hilbert spaces.
%REPRASE: Furthermore, this is also a necessary (and sufficient) condition for the Gaussian kernel to be semi-positive definite independently of the choice of the hyper-parameter $\sigma$~\cite{Berg1984}. In other words, we know that $k_W$ is indefinite in general.

\section{Dealing with indefinite exponential kernels}
This restriction has lead authors to consider only some specific cases of Wasserstein distances which are known to be positive definite. The one-dimensional generic case is proven to be positive definite and has lead to the introduction of sliced Wasserstein distances~\cite{Carriere2017,SlicedWasserstein}. Another notable case is the Wasserstein distance between two Gaussians in more than one dimension, which can even be written in closed form~\cite{GabrielPeyre2019COTW}.

Some kernel methods are still usable with non positive definite kernels, such as LS-SVMs \cite{suykens:worldsci2002,Huang2017}. However, this leads to a slightly different interpretation of the global problem, using Kre\u{\i}n  spaces for which a weaker version of the representer theorem holds~\cite{Ong2004}. In this paper, we propose an alternative which allows us to still work with a positive definite kernel approximating the squared exponential kernel. If the Wasserstein exponential kernel can not be used, we can always find a parameter $\sigma>0$ and a finite dimensional feature map resulting in a positive definite kernel.

%However, preliminary experiments tend to show that when working in Kre\u{\i}n spaces, the cross-validation tends to choose a $\sigma$ which results in a positive definite kernel matrix. As written here above, we know that the kernel matrix cannot be positive definite for all choices of $\sigma$, still it does not mean that it is always indefinite. Though, we can prove the contrary.

\subsection{Positive definite squared exponential kernels and bandwidth choice}
%Even if an exponential kernel matrix based on some 
In this section, we show that for a given dataset, the corresponding Gram matrix of $k_W$ is positive definite if the bandwidth parameter $\sigma>0$ is small enough. 
\begin{definition}\label{def1}
Let $d: \mathcal{D} \times \mathcal{D} \rightarrow \mathbb{R}_{\geq 0}$ be a symmetric function such that $d(\bm{x},\bm{x}) =0$ and let $\left\{\bm{x}_i \in \mathcal{D} \right\}_{i=1}^N$ be a dataset. A squared exponential kernel matrix is defined as
\begin{equation*}
    \bm{K}_{d,\sigma} = \left[\exp\left( \dfrac{-d^2(\bm{x}_i,\bm{x}_j)}{2\sigma^2} \right) \right]_{i,j=1}^N.
\end{equation*}
\end{definition}
By construction, this exponential kernel matrix will be symmetric and have a diagonal consisting only of ones. Its eigenvalues are real. To investigate its (semi)-definiteness, we have to investigate the sign of the minimum eigenvalue.
The minimum eigenvalue $\lambda_{\mathrm{min}}\left( \sigma \right)$ of $\bm{K}_{d,\sigma}$ is the function $\lambda_{\mathrm{min}} : \mathbb{R}_{>0} \rightarrow \mathbb{R}, \sigma \mapsto \min \left\{ \lambda_1, \ldots , \lambda_N \right\}$ where $\lambda_1, \ldots , \lambda_N$ are the eigenvalues of $\bm{K}_{d,\sigma}$.
We can now prove the following result:
\begin{lemma}
\label{lemma1}
The eigenvalues of the exponential kernel matrix $\bm{K}_{d,\sigma}$ are continuous functions of $\sigma$. In particular, $\lambda_{\mathrm{min}}\left( \sigma \right)$ is continuous.
\end{lemma}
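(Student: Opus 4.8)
The plan is to factor the claim through two standard continuity facts: (i) the matrix-valued map $\sigma \mapsto \bm{K}_{d,\sigma}$ is continuous from $\mathbb{R}_{>0}$ into the space of real symmetric $N\times N$ matrices, and (ii) the eigenvalues of a real symmetric matrix, listed in sorted order, depend continuously on the matrix. Composing (i) and (ii) gives continuity of each $\sigma \mapsto \lambda_k(\sigma)$, and $\lambda_{\min}$ is then continuous either as $\lambda_N(\sigma)$ in the sorted order or as the pointwise minimum of the finitely many continuous functions $\lambda_1,\dots,\lambda_N$.

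For step (i): fix indices $i,j$. The map $\sigma \mapsto -d^2(\bm{x}_i,\bm{x}_j)/(2\sigma^2)$ is continuous on $\mathbb{R}_{>0}$ because $\sigma \mapsto \sigma^{-2}$ is continuous there and $d^2(\bm{x}_i,\bm{x}_j)$ is a fixed nonnegative constant; composing with the continuous exponential function shows each entry $(\bm{K}_{d,\sigma})_{ij}$ is continuous in $\sigma$. Since a matrix-valued map is continuous iff all its entries are, and since $\bm{K}_{d,\sigma}$ is symmetric for every $\sigma$ by Definition~\ref{def1}, the map $\sigma \mapsto \bm{K}_{d,\sigma}$ is continuous with respect to, say, the spectral norm $\|\cdot\|_2$.

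For step (ii): I would invoke Weyl's perturbation inequality. For real symmetric matrices $A,B$ with eigenvalues sorted decreasingly, one has $\max_k |\lambda_k(A)-\lambda_k(B)| \le \|A-B\|_2$. Taking $A = \bm{K}_{d,\sigma}$ and $B = \bm{K}_{d,\sigma'}$ yields $|\lambda_k(\sigma)-\lambda_k(\sigma')| \le \|\bm{K}_{d,\sigma}-\bm{K}_{d,\sigma'}\|_2$, and the right-hand side tends to $0$ as $\sigma' \to \sigma$ by step (i). Hence each sorted eigenvalue $\lambda_k(\cdot)$ is (Lipschitz, in particular) continuous on $\mathbb{R}_{>0}$, and so is $\lambda_{\min}$. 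If one prefers to avoid quoting Weyl, the alternative is to note that the coefficients of the characteristic polynomial $\det(\bm{K}_{d,\sigma}-\lambda I)$ are polynomials in the entries of $\bm{K}_{d,\sigma}$, hence continuous in $\sigma$, and then use that the roots of a monic polynomial depend continuously on its coefficients; since $\bm{K}_{d,\sigma}$ is symmetric all roots are real and can be ordered, which again gives continuity of each $\lambda_k(\sigma)$.

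The only point requiring care — and the place a careless argument would go wrong — is the continuous dependence of the spectrum on the matrix entries: one must work with the \emph{sorted} eigenvalues, not with eigenvectors (which need not vary continuously at repeated eigenvalues) nor with an arbitrary labeling. Weyl's inequality (or Hoffman–Wielandt, or the root-continuity argument above) sidesteps this, so I do not expect a genuine obstacle; the proof is essentially a bookkeeping composition of these two well-known facts.
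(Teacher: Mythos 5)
Your proof is correct, and your primary route differs from the paper's. The paper argues via the characteristic polynomial: by the Leibniz formula, the coefficients of $\det(\bm{K}_{d,\sigma}-\lambda\bm{I})$ are polynomials in the entries of $\bm{K}_{d,\sigma}$, hence continuous in $\sigma$, and one then invokes continuity of the roots of a monic polynomial in its coefficients --- exactly the fallback you sketch at the end. Your main argument instead composes entrywise continuity of $\sigma\mapsto\bm{K}_{d,\sigma}$ with Weyl's perturbation inequality $\max_k|\lambda_k(A)-\lambda_k(B)|\le\|A-B\|_2$ for sorted eigenvalues of symmetric matrices. The Weyl route buys you something the paper's does not: it exploits symmetry to get a clean quantitative bound (local Lipschitz continuity of each sorted eigenvalue), and it sidesteps the delicate point that ``continuity of roots'' is naturally a statement about the unordered multiset of roots, so that extracting continuity of the \emph{minimum} requires an extra (easy but unstated) step --- a subtlety the paper glosses over and that you explicitly flag. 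The paper's route is more elementary and works for non-symmetric matrices, but for this symmetric setting your argument is the tighter one. Both are valid; no gap.
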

\begin{proof}
This is a direct consequence of the continuity of the roots of a polynomial under continuously varying coefficients. Therefore, we have to prove that the coefficients of the characteristic polynomial of the exponential kernel matrix $\bm{K}_{d,\sigma}$ is continuous in function of $\sigma$. The characteristic polynomial is given by $\mathrm{det}\left(\bm{K}_{d,\sigma}-\lambda \bm{I} \right)$ and by the formula of Leibniz, we ultimately have that the characteristic polynomial is a sum of products of elements of $\bm{K}_{d,\sigma}-\lambda \bm{I}$, which are continuous in function of $\sigma$. Hence, the coefficients are continuous and so are the eigenvalues.
\end{proof}

\begin{lemma}
\label{lemma2}
$\lim_{\sigma \to 0} \bm{K}_{d,\sigma} = \mathrm{id}$ and thus $\lambda_{\mathrm{min}}\left(0\right) = 1$.
\end{lemma}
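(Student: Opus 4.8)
The plan is to establish the matrix limit entrywise and then transfer it to the eigenvalues via exactly the continuity already exploited in Lemma~\ref{lemma1}.

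First I would examine the entries of $\bm{K}_{d,\sigma}$ separately. For $i=j$, the hypothesis $d(\bm{x}_i,\bm{x}_i)=0$ gives $[\bm{K}_{d,\sigma}]_{ii}=\exp(0)=1$ for every $\sigma>0$, so the diagonal is constantly equal to $1$. For $i\neq j$, assuming the dataset consists of points that are pairwise distinct in the sense $d(\bm{x}_i,\bm{x}_j)>0$ — which holds in particular when $d$ is a genuine metric — the exponent $-d^2(\bm{x}_i,\bm{x}_j)/(2\sigma^2)$ tends to $-\infty$ as $\sigma\to 0^+$, so $[\bm{K}_{d,\sigma}]_{ij}\to 0$. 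Since the matrix has fixed finite size $N$, entrywise convergence is the same as convergence in any matrix norm, whence $\bm{K}_{d,\sigma}\to\mathrm{id}$ as $\sigma\to 0^+$.

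Next I would invoke the very fact used to prove Lemma~\ref{lemma1}: the eigenvalues of a matrix vary continuously with its entries (continuity of the roots of the characteristic polynomial under continuous variation of its coefficients). Hence $\lambda_{\mathrm{min}}(\sigma)\to\lambda_{\mathrm{min}}(\mathrm{id})$ as $\sigma\to 0^+$. The identity matrix has spectrum $\{1\}$, so $\lambda_{\mathrm{min}}(\mathrm{id})=1$, and extending $\lambda_{\mathrm{min}}$ to $\sigma=0$ by this limit yields $\lambda_{\mathrm{min}}(0)=1$.

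The computation itself is routine; the only point needing care is the hypothesis behind the off-diagonal limit. If $d(\bm{x}_i,\bm{x}_j)=0$ for some $i\neq j$ — possible when $d$ is merely a pseudometric, or numerically when two supports nearly coincide — that off-diagonal entry stays equal to $1$ and the limit is a block matrix of all-ones blocks rather than $\mathrm{id}$, giving $\lambda_{\mathrm{min}}(0)=0$ instead. I would therefore make explicit the standing assumption that the $\bm{x}_i$ are pairwise distinct with respect to $d$, under which the stated conclusion holds; this is in any case the genericity assumption implicitly required for $\bm{K}_{d,\sigma}$ to have any chance of being positive definite.
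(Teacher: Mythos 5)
Your proof is correct and follows essentially the same route as the paper's: entrywise analysis showing the diagonal stays at $1$ while off-diagonal entries vanish as $\sigma\to 0^+$, then passing to the eigenvalues. Your explicit caveat that one needs $d(\bm{x}_i,\bm{x}_j)>0$ for $i\neq j$ is a point the paper simply asserts as if it followed from Definition~\ref{def1} (it does not, since $d$ is only assumed to vanish on the diagonal), so flagging it is a small improvement rather than a divergence.
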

\begin{proof}
From Definition~\ref{def1}, we know that $\left[ \bm{K}_{d,\sigma} \right]_{i,j} = \exp\left( \tfrac{-d^2(\bm{x}_i,\bm{x}_j)}{2\sigma^2} \right)$ with $d^2(\bm{x}_i,\bm{x}_i)=0$ and $d^2(\bm{x}_i,\bm{x}_j)>0$ for $i\neq j$.  Denote $C_{i,j} = d^2(\bm{x}_i,\bm{x}_j)$ for simplicity. We have thus $\lim_{\sigma \to 0} \exp\left( \tfrac{0}{2\sigma^2} \right) = 1$ and $\lim_{\sigma \to 0} \exp\left(- \tfrac{C_{i,j}}{2\sigma^2} \right) = 0$ with $C_{i,j}>0$ for $i\neq j$, hence the identity matrix. By consequence, all the eigenvalues are equal to 1.
\end{proof}

\begin{lemma}
\label{lemma3}
We have 
$\lim_{\sigma \to \infty} \bm{K}_{d,\sigma} = \bm{1}\bm{1}^T$ and thus $\lim_{\sigma \to \infty}\lambda_{\mathrm{min}}\left(\sigma\right) = 0$.
\end{lemma}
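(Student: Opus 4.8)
The plan is to mirror the argument of Lemma~\ref{lemma2}, swapping the roles played by the limits $0$ and $\infty$. First I would note that for every pair $(i,j)$ the number $d^2(\bm{x}_i,\bm{x}_j)$ is a fixed nonnegative real, so $\tfrac{-d^2(\bm{x}_i,\bm{x}_j)}{2\sigma^2}\to 0$ as $\sigma\to\infty$, whence $\big[\bm{K}_{d,\sigma}\big]_{i,j}=\exp\!\big(\tfrac{-d^2(\bm{x}_i,\bm{x}_j)}{2\sigma^2}\big)\to e^{0}=1$ for all $i,j$. Since the matrices are of fixed finite size, entrywise convergence coincides with convergence in any matrix norm, so $\bm{K}_{d,\sigma}\to \bm{1}\bm{1}^\top$.

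Next I would transfer this convergence to the spectrum. The cleanest route reuses the mechanism of Lemma~\ref{lemma1}: after the substitution $t=\tfrac{1}{2\sigma^2}$, the entries of $\bm{K}_{d,\sigma}$ are continuous functions of $t$ on $[0,\infty)$ taking the value $\bm{1}\bm{1}^\top$ at $t=0$; by the Leibniz-formula argument the coefficients of the characteristic polynomial are continuous in $t$, hence so are the (ordered) eigenvalues, in particular $\lambda_{\mathrm{min}}$. Since $\sigma\to\infty$ corresponds to $t\to 0^{+}$, we get that $\lambda_{\mathrm{min}}(\sigma)$ tends to the smallest eigenvalue of $\bm{1}\bm{1}^\top$. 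Equivalently, one may simply invoke continuity of the eigenvalues of a symmetric matrix with respect to its entries (e.g.\ via Weyl's inequality).

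Finally I would identify that eigenvalue. The matrix $\bm{1}\bm{1}^\top$ has rank one: $\bm{1}$ is an eigenvector with eigenvalue $N$, while every $\bm{c}$ with $\bm{1}^\top\bm{c}=0$ satisfies $\bm{1}\bm{1}^\top\bm{c}=\bm{0}$, so $0$ is an eigenvalue of multiplicity $N-1\ge 1$ (for $N\ge 2$). Hence the minimal eigenvalue of $\bm{1}\bm{1}^\top$ is $0$, and $\lim_{\sigma\to\infty}\lambda_{\mathrm{min}}(\sigma)=0$.

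I do not expect any genuine obstacle: the statement is essentially a boundary analogue of Lemma~\ref{lemma2}. The only point needing a little care is the passage from entrywise matrix convergence to eigenvalue convergence "at $\sigma=\infty$", which is precisely what the reparametrisation $t=1/(2\sigma^2)$ (or the perturbation bound for symmetric matrices) takes care of.
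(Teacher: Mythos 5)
Your proof is correct and follows essentially the same route as the paper: entrywise convergence of $\bm{K}_{d,\sigma}$ to $\bm{1}\bm{1}^\top$ as $\sigma\to\infty$, followed by the observation that this rank-one matrix has spectrum $\{N,0,\dots,0\}$, so $\lambda_{\mathrm{min}}\to 0$. The only difference is that you explicitly justify the passage from entrywise convergence to eigenvalue convergence (via the reparametrisation $t=1/(2\sigma^2)$ and continuity of the characteristic polynomial's coefficients), a step the paper leaves implicit with ``similarly as before.''
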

\begin{proof}
Similarly as before, we have $\lim_{\sigma \to +\infty} \left[ \bm{K}_{d,\sigma} \right]_{i,j} = 1$ everywhere. By consequence, we have $\lambda_{\mathrm{max}}=N$ and all others equal to zero, hence $\lambda_{\mathrm{min}}=0$.
\end{proof}

\begin{theorem}
\label{existence-sigma}
There exists a $\sigma_{\mathrm{PSD}}\in \mathbb{R}_{+}$ such that $\bm{K}_{d,\sigma}$ is positive semi-definite for all $\sigma \leq \sigma_{\mathrm{PSD}}$.
\end{theorem}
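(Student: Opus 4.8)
The plan is to combine the three preceding lemmas via an elementary continuity argument. By Lemma~\ref{lemma2}, the smallest-eigenvalue function $\sigma \mapsto \lambda_{\mathrm{min}}(\sigma)$ extends to $\sigma = 0$ with value $\lambda_{\mathrm{min}}(0) = 1$, and by Lemma~\ref{lemma1} it is continuous on $\mathbb{R}_{>0}$. So the first step is to make precise that $\lambda_{\mathrm{min}}$ is continuous on the closed half-line $[0,\infty)$, where the value at $0$ is the limit computed in Lemma~\ref{lemma2}. One can either invoke Lemma~\ref{lemma1} together with the explicit entrywise limit of Lemma~\ref{lemma2}, or simply note that the coefficients of the characteristic polynomial $\mathrm{det}(\bm{K}_{d,\sigma} - \lambda\bm{I})$ are continuous functions of $\sigma$ on all of $[0,\infty)$ once the off-diagonal entry $\exp(-d^2(\bm{x}_i,\bm{x}_j)/2\sigma^2)$ is declared to be $0$ at $\sigma = 0$, and then appeal to continuity of polynomial roots under continuously varying coefficients exactly as in the proof of Lemma~\ref{lemma1}.

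Second, I would apply the definition of continuity at $\sigma = 0$ with tolerance $\varepsilon = \tfrac12$: there is a $\delta > 0$ such that $|\lambda_{\mathrm{min}}(\sigma) - 1| < \tfrac12$, hence $\lambda_{\mathrm{min}}(\sigma) > \tfrac12 > 0$, for every $\sigma \in [0,\delta)$. Choosing any $\sigma_{\mathrm{PSD}} \in (0,\delta)$ — for instance $\sigma_{\mathrm{PSD}} = \delta/2$ — then yields $\lambda_{\mathrm{min}}(\sigma) > 0$ for all $0 < \sigma \le \sigma_{\mathrm{PSD}}$. Since $\bm{K}_{d,\sigma}$ is symmetric with real eigenvalues (as already observed after Definition~\ref{def1}), positivity of its smallest eigenvalue is precisely positive definiteness, which in particular gives positive semi-definiteness; this establishes the claim.

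I expect the only delicate point to be the bookkeeping at the boundary $\sigma = 0$: Lemma~\ref{lemma1} is stated only for $\sigma \in \mathbb{R}_{>0}$, so one must either extend it to include $\sigma = 0$ (immediate, since the entrywise extension of $\bm{K}_{d,\sigma}$ is continuous) or phrase the whole argument via the one-sided limit $\lim_{\sigma\to 0^+}\lambda_{\mathrm{min}}(\sigma) = 1$ supplied by Lemma~\ref{lemma2} combined with continuous dependence of eigenvalues on matrix entries. Note that Lemma~\ref{lemma3} is not logically needed here — it only shows that $\sigma_{\mathrm{PSD}}$ cannot in general be taken to be $+\infty$ — so I would omit it from the proof, perhaps mentioning this in a remark. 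Beyond the boundary bookkeeping there is no real obstacle: the statement is a soft consequence of continuity together with the known limit $\lambda_{\mathrm{min}}(0)=1$.
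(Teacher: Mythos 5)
Your proof is correct and rests on exactly the same ingredients as the paper's: continuity of $\lambda_{\mathrm{min}}$ (Lemma~\ref{lemma1}) together with its limit $1$ as $\sigma\to 0$ (Lemma~\ref{lemma2}), and the observation that Lemma~\ref{lemma3} is not needed. The only difference is presentational --- the paper phrases the argument as a somewhat convoluted proof by contradiction with sequences, whereas your direct $\varepsilon$--$\delta$ argument (take $\varepsilon=\tfrac12$, get $\delta$, set $\sigma_{\mathrm{PSD}}=\delta/2$) is cleaner and equally valid.
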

\begin{proof}
Let's proceed \emph{ad absurdum} and suppose this is not the case. We consider the sequence $\left( \sigma_n \right)_n$ converging to 0 with $\sigma_0 = \sigma_{\mathrm{PSD}}$. There must exist some subsequence $\left( \sigma_{n_j} \right)_j$ such that $\left( \lambda_{\mathrm{min}}\left(\sigma_{n_j}\right) \right)_j < 0$. If this sequence if finite, then is suffices to consider a new sequence with $\sigma_{\mathrm{PSD}} = \sigma_{n_{j_{\mathrm{max}}}+1}$. If this subsequence is infinite, then $\left( \lambda_{\mathrm{min}}\left(\sigma_{n}\right) \right)_n$ cannot converge to 1. This is impossible because of the continuity of $\lambda_{\mathrm{min}}\left( \sigma \right)$ (lemma~\ref{lemma1}) and its convergence to 1 (lemma~\ref{lemma2}). Hence, there exist 
some $\sigma_{\mathrm{PSD}}>0$ such that $\lambda_{\mathrm{min}}\left( \sigma \right)\geq 0$ for all $\sigma \leq \sigma_{\mathrm{PSD}}$. This proves our proposition.
\end{proof}
We can empirically see the result of Proposition~\ref{existence-sigma} in Fig.~\ref{kernels}, where all eigenvalues are positive. To give some intuition, decreasing the $\sigma$ tends to make the smallest distances more predominant, pushing the smallest eigenvalue progressively to the positive side. In this sense, an indefinite kernel matrix with $\sigma$ close to $\sigma_{\mathrm{PSD}}$ will lead to very proportionally very small negative eigenvalues in magnitude. In this case, a finite positive definite approximation can be justified.
\begin{figure}[]
    \centering
    \subfloat[$\ell^2$]{\includegraphics[width=0.35\textwidth]{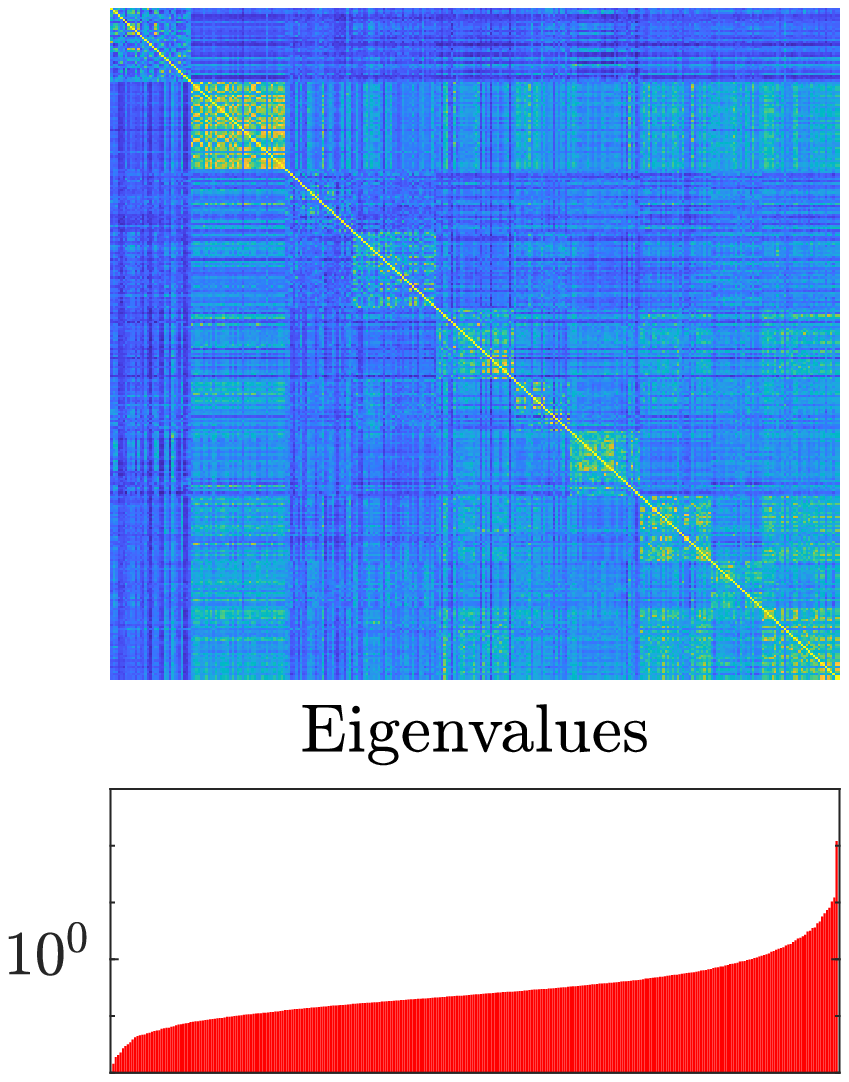}}
    \subfloat[$\mathcal{W}_2^2$]{\includegraphics[width=0.35\textwidth]{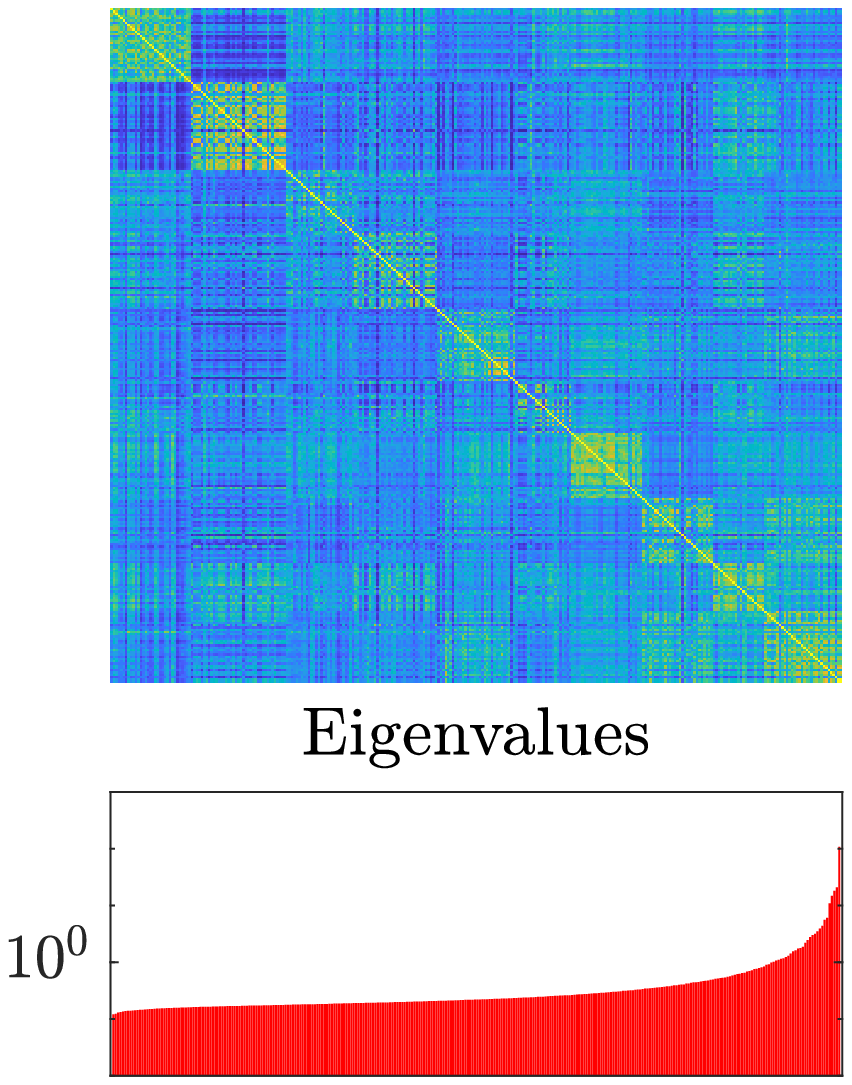}}
    \caption{Comparison of the classical squared exponential kernel matrix (based on a $\ell^2$-distance) and the introduced Wasserstein exponential kernel matrix on 250 normalized digits of the MNIST dataset~\cite{lecun-mnisthandwrittendigit-2010}. The digits are ordered by class in ascending order.}
    \label{kernels}
\end{figure}

\subsection{Wasserstein features}
We can consider a finite dimensional feature map $\bm{\phi}(\bm{x})$ such that the positive semi-definite kernel $\bm{\phi}(\bm{x})^\top \bm{\phi}(\bm{y})$ approximates $k_W(\bm{x},\bm{y})$ given in~\eqref{eq:kW}. This finite approximation is based on a training dataset $\left\{ \bm{x}_i\right\}_{i=1}^N$ for constructing an original kernel matrix $\bm{K} = \left[k_W(\bm{x}_i,\bm{x}_j)\right]_{i,j=1}^N \in \mathbb{R}^{N \times N}$. It suffices to truncate the spectral decomposition of the kernel matrix $\bm{K} = \sum_{l=1}^N \lambda_l \bm{v}_l \bm{v}_l^\top$ to the $\ell$ largest strictly positive eigenvalues. This will result in a new positive definite kernel matrix $\bm{K}^{(\ell)} \defeq \sum_{l=1}^\ell \lambda_l \bm{v}_l \bm{v}_l^\top \succ 0$ with $\lambda_1\geq \dots\geq \lambda_N$. We can now reconstruct the different components of an approximate feature map
\begin{equation}
    \phi_l\left(\bm{x} \right) \defeq \tfrac{1}{\sqrt{\lambda_l}} \bm{k}_{\bm{x}}^\top \bm{v}_l, \qquad \mathrm{for}\; i=1,\ldots,\ell,\label{eq:ApproxFeatureMap}
\end{equation}
with $\bm{k}_{\bm{x}} \defeq \left[ k_W(\bm{x},\bm{x}_1) \; \cdots \; k_W(\bm{x},\bm{x}_N) \right]^\top$. We refer to these different components as the \emph{Wasserstein features} as they compose the approximate feature map $\bm{\phi}\left(\bm{x}\right) \defeq \left[ \phi_1(\bm{x}) \; \cdots \; \phi_\ell(\bm{x}) \right]^\top$ of the Wasserstein exponential kernel. This approximate feature map is constructed by using a training dataset, but can afterwards be evaluated at any out-of-sample point. By construction, we can verify that the Wassertein features evaluated on the training dataset result in the truncated kernel matrix:
\begin{theorem}We have
$    \left[ \bm{\phi}(\bm{x}_i)^\top \bm{\phi}(\bm{x}_j)\right]_{i,j=1}^N = \bm{K}^{(\ell)}.
$\label{prop:Kk}
\end{theorem}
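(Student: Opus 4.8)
The plan is to evaluate the feature map \eqref{eq:ApproxFeatureMap} explicitly at a training point and compare the resulting inner product with the spectral truncation $\bm{K}^{(\ell)}$. The crucial observation is that when $\bm{x}=\bm{x}_i$ is one of the training points, the vector $\bm{k}_{\bm{x}_i}=\left[k_W(\bm{x}_i,\bm{x}_1)\;\cdots\;k_W(\bm{x}_i,\bm{x}_N)\right]^\top$ is exactly the $i$-th column of $\bm{K}$; that is, $\bm{k}_{\bm{x}_i}=\bm{K}\bm{e}_i$, where $\bm{e}_i$ denotes the $i$-th canonical basis vector of $\mathbb{R}^N$.

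First I would substitute this into \eqref{eq:ApproxFeatureMap}: using the symmetry $\bm{K}^\top=\bm{K}$ and the eigenvalue relation $\bm{K}\bm{v}_l=\lambda_l\bm{v}_l$, one obtains
\begin{equation*}
\phi_l(\bm{x}_i)=\tfrac{1}{\sqrt{\lambda_l}}\,\bm{k}_{\bm{x}_i}^\top\bm{v}_l=\tfrac{1}{\sqrt{\lambda_l}}\,\bm{e}_i^\top\bm{K}\bm{v}_l=\tfrac{\lambda_l}{\sqrt{\lambda_l}}\,\bm{e}_i^\top\bm{v}_l=\sqrt{\lambda_l}\,(\bm{v}_l)_i,
\end{equation*}
where $(\bm{v}_l)_i$ is the $i$-th entry of the eigenvector $\bm{v}_l$; the division by $\sqrt{\lambda_l}$ is legitimate since the truncation retains only the $\ell$ strictly positive eigenvalues. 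Then I would assemble the inner product componentwise,
\begin{equation*}
\bm{\phi}(\bm{x}_i)^\top\bm{\phi}(\bm{x}_j)=\sum_{l=1}^\ell\phi_l(\bm{x}_i)\phi_l(\bm{x}_j)=\sum_{l=1}^\ell\lambda_l\,(\bm{v}_l)_i(\bm{v}_l)_j=\sum_{l=1}^\ell\lambda_l\big[\bm{v}_l\bm{v}_l^\top\big]_{ij}=\big[\bm{K}^{(\ell)}\big]_{ij},
\end{equation*}
which is precisely the claimed identity once $i,j$ range over $1,\dots,N$.

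There is essentially no hard part here: the argument is a direct computation. The only points requiring a little care are (i) identifying $\bm{k}_{\bm{x}_i}$ with a column of $\bm{K}$, which is immediate from the definition of $\bm{K}$; (ii) invoking the symmetry of $\bm{K}$ so that $\bm{k}_{\bm{x}_i}^\top\bm{v}_l=\bm{e}_i^\top\bm{K}\bm{v}_l$; and (iii) taking the eigenvectors $\bm{v}_l$ orthonormal, which is what makes $\sum_{l=1}^N\lambda_l\bm{v}_l\bm{v}_l^\top$ a genuine spectral decomposition and $\big[\bm{v}_l\bm{v}_l^\top\big]_{ij}=(\bm{v}_l)_i(\bm{v}_l)_j$. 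If a basis-free phrasing is preferred, the same computation reads $\bm{\phi}(\bm{x}_i)^\top\bm{\phi}(\bm{x}_j)=\sum_{l=1}^\ell\lambda_l^{-1}(\bm{K}\bm{v}_l)_i(\bm{K}\bm{v}_l)_j$, which collapses to $\bm{K}^{(\ell)}$ by the same eigen-relation.
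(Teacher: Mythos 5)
Your proof is correct and follows essentially the same route as the paper: both identify $\bm{k}_{\bm{x}_i}$ with the $i$-th column of $\bm{K}$ (the paper writes it via the spectral expansion $\sum_l \lambda_l \bm{v}_l[\bm{v}_l]_i$, you via $\bm{K}\bm{e}_i$), deduce $\phi_l(\bm{x}_i)=\sqrt{\lambda_l}[\bm{v}_l]_i$, and sum over the retained eigenvalues. You are simply more explicit about the orthonormality of the eigenvectors and the positivity of the retained $\lambda_l$, which the paper leaves implicit.
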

\begin{proof}
It suffices to observe that $\bm{k}_{\bm{x}_i} = \sum_{l=1}^N \lambda_l \bm{v}_l \left[\bm{v}_l\right]_i$. By consequence, we have $\phi_l(\bm{x}_i) = \sqrt{\lambda_l}\left[\bm{v}_l \right]_i$.
\end{proof}
Proposition~\ref{existence-sigma} suggests that even if no suitable $\sigma$ can be found such that the kernel matrix is positive, the negative eigenvalues will remain very small in magnitude. By consequence, we can suppress them without loosing much information. A truncated kernel is thus very close to the original one in spectral norm. This justifies the Wasserstein features in this sense that they are very close to the Wasserstein exponential kernel as well as being positive definite by construction. This fact can be visualized on Fig.~\ref{wass-features}.

\begin{figure*}[]
    \centering
    \subfloat[$\ell =5$]{\includegraphics[width=0.23\textwidth]{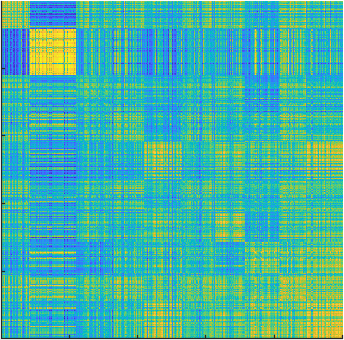}}\hfill
    \subfloat[$\ell=15$]{\includegraphics[width=0.23\textwidth]{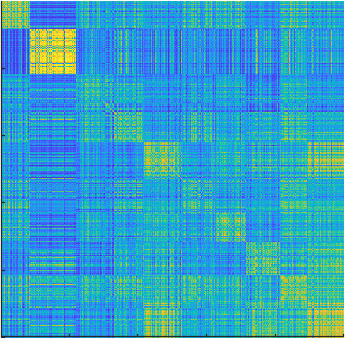}}\hfill
    \subfloat[$\ell=250$]{\includegraphics[width=0.23\textwidth]{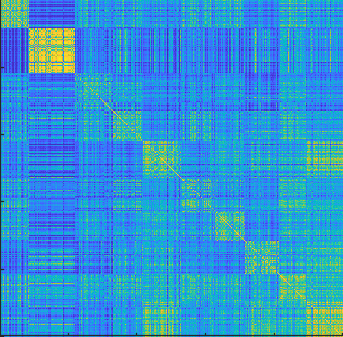}}\hfill
    \subfloat[Exact]{\includegraphics[width=0.23\textwidth]{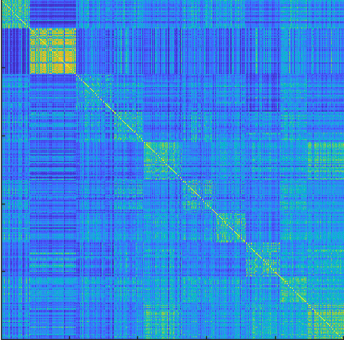}}
    \caption{Kernel matrices constructed as the inner products of a different number Wasserstein features of a test set. These matrices are compared with the exact Wasserstein squared exponential kernel matrix of the test set. Both the training set and the test are of size $N=500$.}
    \label{wass-features}
\end{figure*}

Clearly, the \emph{Wasserstein features} yield a positive semi-definite kernel. Moreover, it is also advantageous to work with finite dimensional feature maps to reduce the training time. Indeed, the computation of the Wasserstein distance (or an approximation with \emph{e.g.} Sinkhorn's algorithm~\cite{Lightspeed}) is still relatively expensive compared to $\ell^2$ distance. 
%Obtaining the full Wasserstein kernel matrix has complexity $\mathcal{O}\left(N^2\right)$ with $N$ the number of datapoints. Using the Wasserstein features, one can reduce it to $\mathcal{O}\left(N_1^2+N_1N_2\right)$ where $N_1$ is the training dataset and $N_2$ the test dataset.

\section{Experiments}

%%%%%%%%%%%%%%%%%%%%%%%%%%%%%%%%%%%%%%%%%%%%%%%%%%%%%%%%%%%%%%%%%%%%%%%%%%%%%%%%%%%%%%%%%%%%%%%%
%%%%%%%% TABLE OF RESULTS %%%%%%%%%%%%%%%%%%%%%%%%%%%%%%%%%%%%%%%%%%%%%%%%%%%%%%%%%%%%%%%%%%%%%%
\begin{table*}[] %!htb or !htbp  ?
\begin{center}
\caption{Percentage of classification error on the test set of three datasets. The standard deviation is given in parenthesis. The number of repeated simulations is 7 for MNIST, 8 for Quickdraw and 6 for USPS.}
   \label{table:results}
    \begin{tabular}{r c c c c c c c}
      \textbf{Dataset} & \multicolumn{2}{c}{\textbf{MNIST}} & \multicolumn{2}{c}{\textbf{Quickdraw}} & \multicolumn{2}{c}{\textbf{USPS }}   \\
      \cmidrule(lr){2-3}\cmidrule(lr){4-5}\cmidrule(lr){6-7} \\
      \textbf{Method} &\textbf{Avg.} & \textbf{Best} & \textbf{Avg.} & \textbf{Best} & \textbf{Avg.} & \textbf{Best} \\
      Wass. LS-SVM (Core+OOS) & 3.95 ($\pm$ 0.18) & 3.74 & 11.45 ($\pm$ 0.39) & 10.97 & 6.77 ($\pm$ 0.52) & 6.20\\ 
      Wass. LS-SVM (Core) & 3.81 ($\pm$ 0.34) & 3.28 & 10.80 ($\pm$ 0.19) & 10.52 & 7.93 ($\pm$ 1.45) & 6.35\\
      Wass. LS-SVM (Indef.) & \textbf{3.40} ($\pm$ 0.11) & \textbf{3.23} & \textbf{10.75} ($\pm$ 0.27) & 10.35 & 6.15 ($\pm$ 0.67) & 5.45\\
      R. Wass. LS-SVM (Core+OOS) & 3.91 ($\pm$ 0.27) & 3.45 & 11.79 ($\pm$ 0.48) & 10.95 & 6.68 ($\pm$ 0.80) & 5.70\\ 
      R. Wass. LS-SVM (Core) & 3.71 ($\pm$ 0.15) & 3.46 & 10.99 ($\pm$ 0.44) & \textbf{10.07} & 6.35 ($\pm$ 0.11) & 6.20\\
      R. Wass. LS-SVM (Indef.) & 3.48 ($\pm$ 0.13) & 3.29 & 12.43 ($\pm$ 0.43) & 11.95 & \textbf{5.70} ($\pm$ 0.29) & \textbf{5.40}\\
      Wass. kNN & 6.31 ($\pm$ 0.33) & 5.81 & 12.26 ($\pm$ 0.33) & 11.91 & 6.60 ($\pm$ 0.44) & 6.00\\
      & & & & & & \\
      RBF LS-SVM & 4.26 ($\pm$ 0.10) & 4.07 & 11.46 ($\pm$ 0.20) & 11.23 & 6.75 ($\pm$ 0.04) & 6.70 \\
      $\ell^2$ kNN & 7.20 ($\pm$ 0.15) & 6.95 & 15.32 ($\pm$ 0.40) & 14.68 & 7.52 ($\pm$ 0.38) & 7.20 \\
      \cmidrule(lr){2-3}\cmidrule(lr){4-5}\cmidrule(lr){6-7}\\
      \textbf{Set size} &\textbf{Core + OOS} & \textbf{Others} & \textbf{Core + OOS} & \textbf{Others} & \textbf{Core + OOS} & \textbf{Others} \\
      Training & 1500 $+$ 2500 & 4000 & 500 + 750 & 1250 & 1000 + 1500 & 2500 \\
      Validation & 5000 & 5000 & 5000 & 5000 & 2000 & 2000 \\
      Test & 10 000 & 10 000 & 10 000 & 10 000 & 2000 & 2000
    \end{tabular}
  \end{center}
\end{table*}

%%%%%%%%%%%%%%%%%%%%%%%%%%%%%%%%%%%%%%%%%%%%%%%%%%%%%%%%%%%%%%%%%%%%%%%%%%%%%%%%%%%%%%%%%%%%%%%%
%%%%%%%%% IMAGES %%%%%%%%%%%%%%%%%%%%%%%%%%%%%%%%%%%%%%%%%%%%%%%%%%%%%%%%%%%%%%%%%%%%%%%%%%%%%%%
\begin{figure*}[]
    \centering
    \subfloat[Comparison of Wasserstein exponential kernels with other similar methods.]{\includegraphics[width=0.49\textwidth]{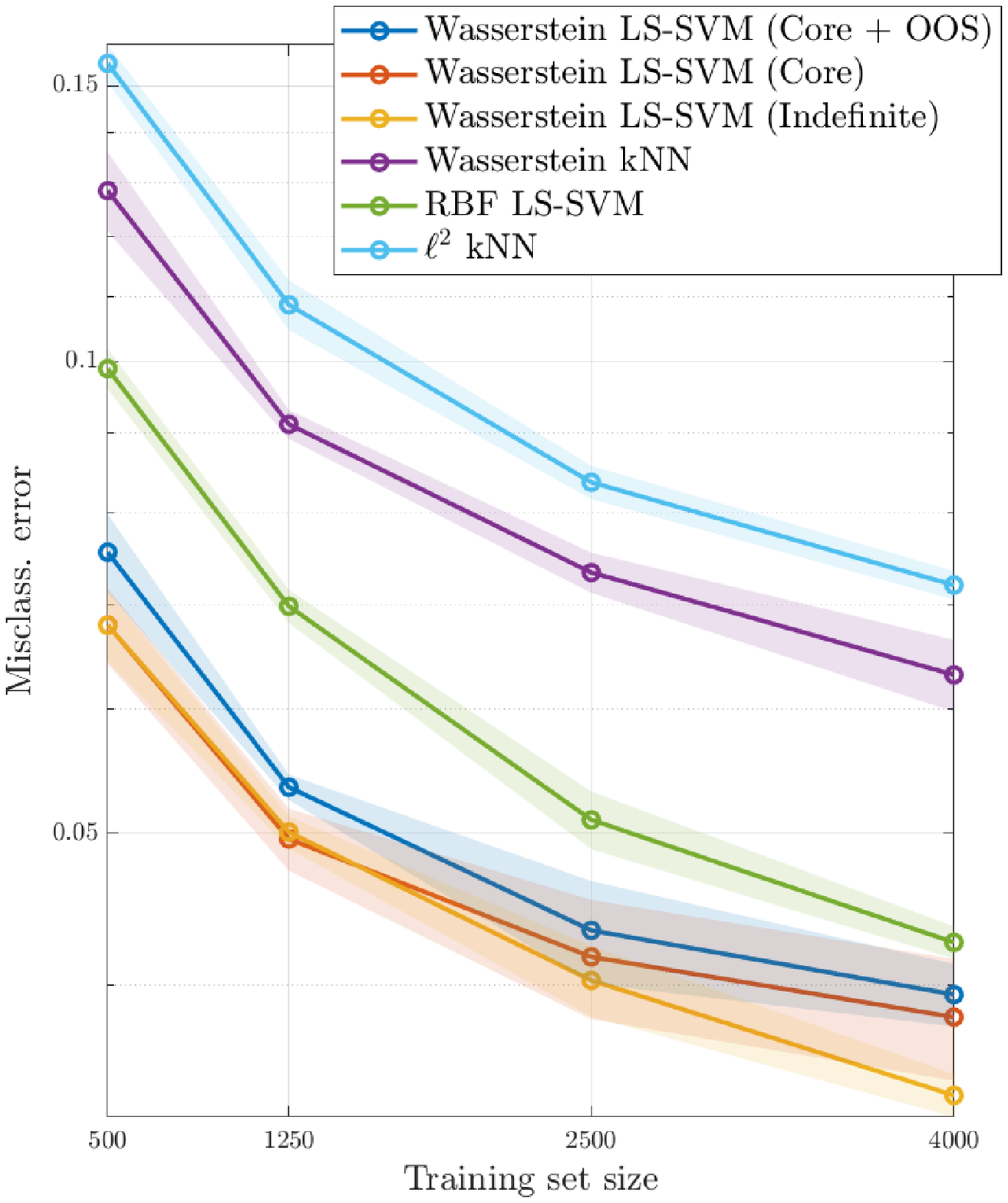}}\hfill
    \subfloat[Comparison of \emph{reweighted} Wasserstein exponential kernels with other similar methods.]{\includegraphics[width=0.49\textwidth]{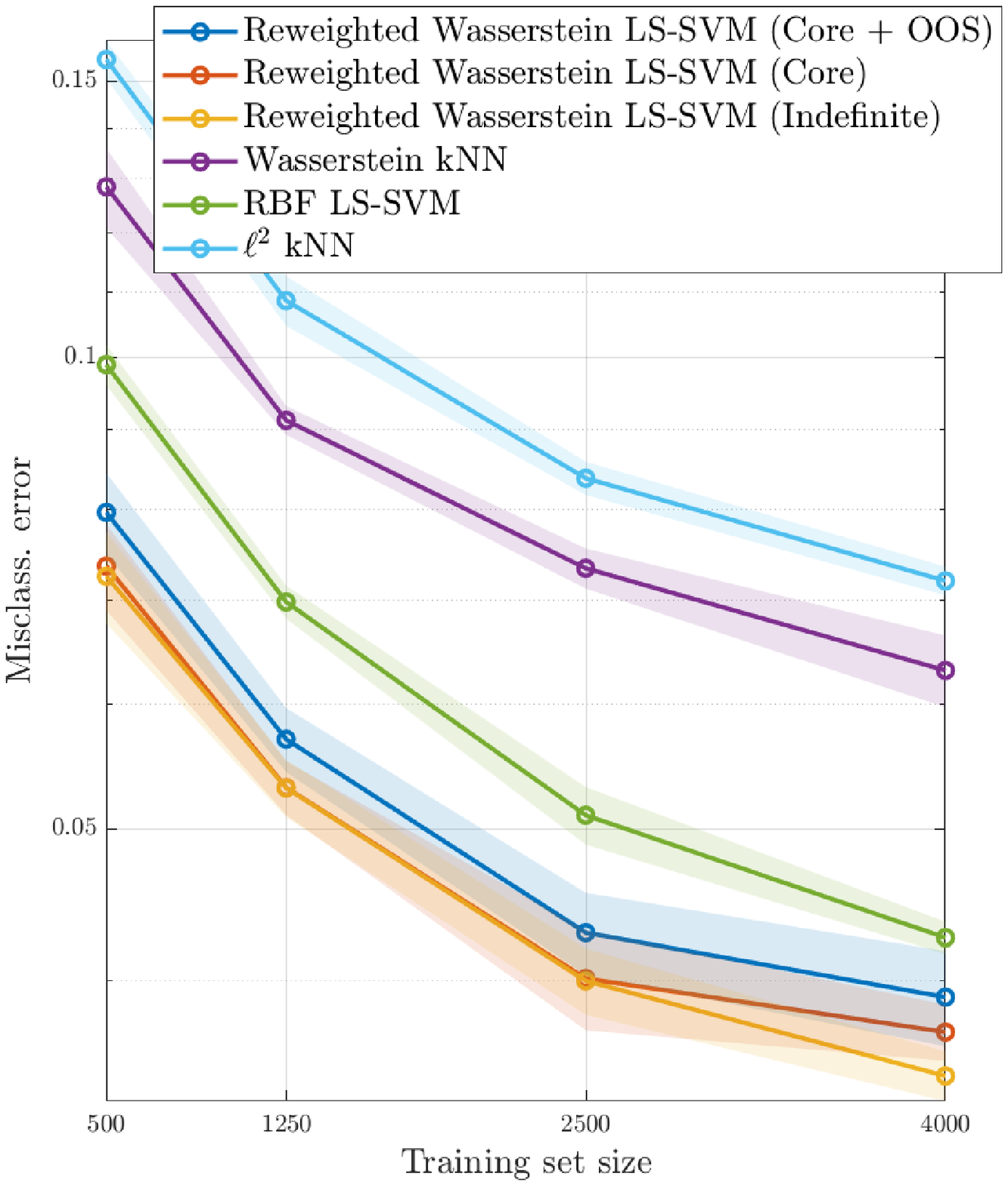}}
    \caption{ Mean misclassification rates for various subset sizes of the MNIST dataset, computed on 7 simulations. The standard deviation is given by the errors bars. For the specific case of ``Core + OOS", the out-of-sample subset represents 300 datapoints on 500, 750 on 1250, 1500 on 2500 and 2500 on 4000. The size of validation set is always 5000 and of the test set always 10 000.}
    \label{expe-shape}
\end{figure*}

\subsection{Setup for 2D shape classification} 
Let $\bm{u}$ be a greyscale image that we unfold as a vector of length $m$ and so that $u_i>0$ is the ``grey" value at the pixel $\bm{y_i}$ of a pixel grid. It is mapped to a probability $\bm{\alpha} = \sum_{i=1}^m a_i \delta_{\bm{y_i}}$ by defining $a_i = u_i/\|\bm{u}\|_1$, so that the mass of $\bm{\alpha}$ is one.
In practice, the $p=2$ Wassertein distance is calculated in this paper with the help of the well-known entropic regularization, namely
\begin{equation*}
\mathcal{W}_2^{2}(\bm{\alpha},\bm{\beta},\epsilon) = \min_{\bm{\pi} \in \Pi\left(\bm{\alpha},\bm{\beta}\right)} \sum_{i,j} \pi_{ij}d_{ij}^2 +\epsilon \pi_{ij}\log \pi_{ij},
\end{equation*}
where $\epsilon>0$ is a small regularization term and $d_{ij}$ is the Euclidean distance between pixels located at $\bm{y_i}$ and $\bm{y_j}$ in a pixel grid. The advantage of this regularized problem is that its solution can be efficiently obtained thanks to the Sinkhorn algorithm, which can be parallelized. For more details, we refer to~\cite{GabrielPeyre2019COTW}. All the simulations used $\epsilon = 0.4$ and the diagonal of the distance matrix set to zero.

\subsection{Shape recognition}
We illustrate the use of the Wasserstein based kernels in the context of shape classifications. Namely, we train a Least Squares Support Vector Machine~\cite{LSSVM} classifier on  subsets of the MNIST~\cite{lecun-mnisthandwrittendigit-2010}, Quickdraw\footnote{\texttt{https://quickdraw.withgoogle.com/data}} and USPS~\cite{USPS} datasets, which are sampled uniformly at random.
These three datasets contain handwritten digits and shapes.
The multiclass problem is solved by a one-versus-one encoding. One instance of these binary classifiers $f(\bm{x}) = \sign( \bm{w}^{\star\top}\bm{\phi}(\bm{x}) +b^\star)$ is obtained by solving
\begin{equation}
\min_{\substack{\bm{w}\in \mathbb{R}^\ell;b\in \mathbb{R}\\
e_i\in \mathbb{R}}}\bm{w}^\top\bm{w} + \frac{\gamma}{N}\sum_{i=1}^{N} e_i^2 \text{ s.t. } e_i = y_i-\bm{w}^\top\bm{\phi}(\bm{x}_i) -b, \label{eq:primal}
\end{equation}
where $y_i\in \{-1,1\}$ and $\bm{\phi}(\bm{x})\in \mathbb{R}^\ell$ is a feature map obtained for instance thanks to~\eqref{eq:ApproxFeatureMap}.
The solution is obtained by solving
\begin{equation*}
\begin{bmatrix}
        \sum_{i}\bm{\phi}(\bm{x_i})\bm{\phi}(\bm{x_i})^\top + \frac{N}{\gamma}\mathbb{I} & \sum_{i}\bm{\phi}(\bm{x_i})\\
    \sum_{i}\bm{\phi}(\bm{x_i})^\top & N
\end{bmatrix}
\begin{bmatrix}
\bm{w}\\
b
\end{bmatrix}
 = \begin{bmatrix}
\sum_{i}y_i\bm{\phi}(\bm{x_i})\\
\sum_{i}y_i
\end{bmatrix},
\end{equation*}
which is a $(\ell+1)\times (\ell+1)$ linear system.
A classifier can also be obtained by solving the dual problem of~\eqref{eq:primal}. The optimality conditions of this dual problem yield the following $(N+1)\times (N+1)$ linear system
\begin{equation}
\begin{bmatrix}
        K + \frac{N}{\gamma}\mathbb{I} & \bm{1}\\
    \bm{1}^\top & 0
\end{bmatrix}
\begin{bmatrix}
\bm{\alpha}\\
b
\end{bmatrix}
 = \begin{bmatrix}
\bm{y}\\
0
\end{bmatrix}.\label{eq:dual}
\end{equation}
The resulting classifier has then the expression $f(\bm{x}) = \sign(\sum_{i=1}^N \alpha_i^\star k(\bm{x},\bm{x_i}) +b^\star)$.
The hyperparameters $\sigma>0$ and $\gamma>0$ are chosen by validation. The final classification is done by minimizing the hammming distance on the one-versus-one outputs~\cite{mlssvm}.
In order to account for the amount of ink  in the grey images $\bm{u}$ and $\bm{v}$, we also introduce a  reweighted kernel that is defined as 
\begin{equation}
k_{RW}(\bm{u},\bm{v}) = \|\bm{u}\|_1 \|\bm{v}\|_1 k_W\left(\frac{\bm{u}}{\|\bm{u}\|_1},\frac{\bm{v}}{\|\bm{v}\|_1}\right).\label{eq:kernelRescaled}
\end{equation} Notice that a similar kernel has been defined with the Euclidean distance in~\cite{Mairal2016,Mairal}.

In our experiments, we compare several methods based on $k_W$ and $k_{RW}$, Wasserstein and Euclidean distances.
\subsubsection{Core Wasserstein kernel}
The ``Core" method consists in solving~\eqref{eq:primal} thanks to the feature map~\eqref{eq:ApproxFeatureMap} associated to $K^{(\ell)}$. The parameter $\ell$ is chosen such that all the selected eigenvalues are larger than $10^{-6}$ to avoid numerical instabilities. The optimal $\bm{w}^\star$ and $b^\star$ are then obtained by solving a linear system.

\subsubsection{Core Wasserstein kernel with out-of-sample}
Our second method named ``Core + OOS" uses almost the same methodology as ``Core". However, a subset of the training set is used to construct the truncated Wasserstein kernel of Proposition~\ref{prop:Kk}. Then the out-of-sample (OOS) formula~\eqref{eq:ApproxFeatureMap} is used to construct an approximation of the kernel matrix on the full training dataset. The advantage of this approximation is that it can avoid the full eigendecomposition of the kernel matrix which is necessary for the ``Core" method.
\subsubsection{Indefinite Wasserstein kernel}
 For this second method, we simply use for the kernel matrix the indefinite Gram matrix associated to~\eqref{eq:kernelRescaled} and solve the system~\eqref{eq:dual} associated to the dual formulation of LS-SVM. While the associated optimization problem is not necessarily bounded in that case, the linear system~\eqref{eq:dual} still has a solution in practice (almost surely if $\gamma$ selected uniformly at random.). We name this method ``Indefinite Wasserstein" in Figure~\ref{expe-shape}.
\subsubsection{Gaussian RBF}
The previous methods are compared with a classical LS-SVM classifier with kernel
\[
k(\bm{u},\bm{v}) = \exp\left( \dfrac{-\|\bm{u}-\bm{v}\|_2^2}{2\sigma^2} \right).
\]
The parameters $\sigma$ and $\gamma$ are obtained by validation in the same spirit as above.
\subsubsection{KNN}
The same task is also performed for a kNN classifiers defined both with Euclidean and Wasserstein distances~\cite{KNN_OT}. Those two methods are considered as benchmarks to assess the accuracy of the kernel methods hereabove. Notice that the number of nearest neighbours $k$ is selected by validation.
\subsection{Description of the simulations}
The simulations are repeated several times and the mean classification error rate is given as well as the standard deviation. We emphasize that the classes are balanced in each of the datasets. The coded is provided on GitHub\footnote{\texttt{https://github.com/hdeplaen/Exponential\_Wasserstein\_Kernels}}.

\subsection{Discussion}
The results obtained by classifiers defined with Wasserstein exponential kernel $k_W$ outperform the Euclidean and Wasserstein kNN classifiers, as well as LS-SVM with a Gaussian RBF kernel (see Fig.~\ref{expe-shape} and Table~\ref{table:results}). The latter is especially outperformed when the number of training data points is limited to a few thousands. We observed empirically that the advantage of $k_W$ is indeed reduced as the size of the training set further increases. Surprisingly, the classifier obtained for the indefinite $k_W$ kernel yields the best performance when the training set is larger. For moderate size training sets, LS-SVM classifiers can be competitive with respect to other methods that do not rely on convolutional neural networks. The latter are known to be performant for relatively large training datasets.
While an advantage of Wasserstein based methods is an increased accuracy in the classification tasks of this paper, a main disadvantage is the increased training time.

\section{Conclusion}
In this paper, we proposed the use of Wasserstein squared exponential kernels for classifying shapes given relatively small training datasets. Although the computation of Wasserstein distances is expensive, it can be made possible thanks to the entropic regularization and the Sinkhorn algorithm, as it is well known. The so-called Wasserstein features are also proposed to serve as an approximation of the Wasserstein squared exponential kernel which is not necessarily positive semidefinite. In particular, this construction is possible if the bandwidth parameter is small enough as it is explained by elementary theoretical results. These theoretical results also open a door to more general exponential kernels based on any measure of similarity.

\section*{Acknowledgment}
\footnotesize{
EU: The research leading to these results has received funding from the European Research Council under the European Union's Horizon 2020 research and innovation program / ERC Advanced Grant E-DUALITY (787960). This paper reflects only the authors' views and the Union is not liable for any use that may be made of the contained information. Research Council KUL: Optimization frameworks for deep kernel machines C14/18/068. Flemish Government: FWO: projects: GOA4917N (Deep Restricted Kernel Machines: Methods and Foundations), PhD/Postdoc grant. This research received funding from the Flemish Government under the “Onderzoeksprogramma Artificiële Intelligentie (AI) Vlaanderen” programme. Ford KU Leuven Research Alliance Project KUL0076 (Stability analysis and performance improvement of deep reinforcement learning algorithms). The computational resources and services used in this work were provided by the VSC (Flemish Supercomputer Center), funded by the Research Foundation - Flanders (FWO) and the Flemish Government – department EWI.}

\begin{footnotesize}

% IF YOU USE BIBTEX,
% - DELETE THE TEXT BETWEEN THE TWO ABOVE DASHED LINES
% - UNCOMMENT THE NEXT TWO LINES AND REPLACE 'Name_Of_Your_BibFile'

\bibliographystyle{unsrt}
\bibliography{bib}

\end{footnotesize}

\end{document}